\documentclass[letterpaper]{article} 
\usepackage{aaai25}  
\usepackage{times}  
\usepackage{helvet}  
\usepackage{courier}  
\usepackage[hyphens]{url}  
\usepackage{graphicx} 
\urlstyle{rm} 
\usepackage{natbib}  
\usepackage{caption} 
\frenchspacing  
\setlength{\pdfpagewidth}{8.5in} 
\setlength{\pdfpageheight}{11in} 
%
\usepackage{algorithm}
\usepackage[noend]{algorithmic}
\usepackage{subcaption}
\usepackage{xcolor}
%
\usepackage{newfloat}
\usepackage{booktabs}
\usepackage{multirow}
\usepackage{listings}
\DeclareCaptionStyle{ruled}{labelfont=normalfont,labelsep=colon,strut=off} 
\lstset{%
	basicstyle={\footnotesize\ttfamily},
	numbers=left,numberstyle=\footnotesize,xleftmargin=2em,
	aboveskip=0pt,belowskip=0pt,%
	showstringspaces=false,tabsize=2,breaklines=true}
\floatstyle{ruled}
\newfloat{listing}{tb}{lst}{}
\floatname{listing}{Listing}
%
\pdfinfo{
/TemplateVersion (2025.1)
}
\usepackage{amsmath}
\usepackage{amssymb}
\usepackage{mathtools}
\usepackage{amsthm}

\theoremstyle{plain}
\newtheorem{theorem}{Theorem}[section]

\theoremstyle{definition}

\newcommand{\method}[0]{DSBD}

\setcounter{secnumdepth}{2} 

\newcommand{\nop}[1]{}


%


\title{Dynamic-Width Speculative Beam Decoding for LLM Inference}
\author{
    Zongyue Qin, 
    Zifan He,
    Neha Prakriya, 
    Jason Cong,
    Yizhou Sun
}
\affiliations{
    University of California Los Angeles, CA, USA\\
    \{qinzongyue, zifanhe, nehaprakriya, cong, yzsun\}@cs.ucla.edu
%
}

\usepackage{bibentry}

\begin{document}

\maketitle

\begin{abstract}
Large language models (LLMs) based on transformer architecture have shown outstanding performance across numerous real-world tasks. However, the autoregressive nature of these models makes the inference process slow and costly. 
Speculative decoding has emerged as a promising solution, leveraging a smaller auxiliary model to draft future tokens, which are then validated simultaneously by the larger model, achieving a speed-up of $1$-$2\times$. Although speculative decoding matches the same distribution as multinomial sampling, multinomial sampling itself is prone to suboptimal outputs, whereas beam sampling is widely recognized for producing higher-quality results by maintaining multiple candidate sequences at each step.
%
%
This paper explores the novel integration of speculative decoding with beam sampling. However, there are four key challenges: (1) how to generate multiple sequences from the larger model's distribution given draft sequences from the small model; (2) how to dynamically optimize the number of beams to balance efficiency and accuracy; (3) how to efficiently verify the multiple drafts in parallel; and (4) how to address the extra memory costs inherent in beam sampling.
To address these challenges, we propose dynamic-width speculative beam decoding (\method{}). Specifically, we first introduce a novel draft and verification scheme that generates multiple sequences following the large model's distribution based on beam sampling trajectories from the small model. Then, we introduce an adaptive mechanism to dynamically tune the number of beams based on the context, optimizing efficiency and effectiveness. Besides, we extend tree-based parallel verification to handle multiple trees simultaneously, accelerating the verification process. Finally, we illustrate a simple modification to our algorithm to mitigate the memory overhead of beam sampling.
Experimental results show that our approach achieves a $1.5$-$1.9\times$ speed-up and $1.8$-$2.5\times$ lower energy consumption compared to beam sampling, with no loss in downstream performance. Moreover, it can produce significantly higher-quality outputs than speculative decoding, while maintaining similar time, memory, and energy costs. In summary, our method offers a more efficient and effective inference process for LLMs.
\end{abstract}

%

\section{Introduction}

In recent years, large language models based on transformer architecture~\cite{vaswani2017attention}, such as GPT-4~\cite{achiam2023gpt}, Llama-3~\cite{llama3modelcard}, and PALM~\cite{anil2023palm}, have demonstrated remarkable performance across a wide range of real-world tasks, including text generation, summarization, and translation. However, the autoregressive nature of these models, where tokens are generated one at a time, leads to slow inference speeds and high computational costs. As the size and complexity of LLMs continue to increase, the demands on computational resources and energy consumption during inference have become major concerns, limiting their scalability and accessibility.


Speculative decoding has emerged as a promising technique to accelerate LLM inference by leveraging a smaller auxiliary model to generate draft tokens. These tokens are then validated by the large model, resulting in a significant reduction in inference time. The primary advantage of speculative decoding is its ability to maintain the same quality of output as multinomial sampling while achieving a 1-2$\times$ speed-up. However, multinomial sampling itself is limited to generating a single sequence based on local optimality. This limitation makes it susceptible to returning suboptimal results, as it lacks the diversity that could be achieved by considering multiple candidate sequences simultaneously.

Motivated by the need to improve the output quality, 
we explore the integration of speculative decoding with beam sampling, a technique that maintains multiple candidate sequences (beams) at each step to enhance the diversity and quality of the generated output. This fusion, however, presents several challenges. First, while previous studies focused on obtaining a single token from the large model distribution given draft tokens from the smaller model, our approach requires generating multiple tokens (beams) simultaneously, which necessitates a new verification scheme. Second, determining the optimal number of beams is critical: too many beams can lead to inefficiency due to a high rejection rate, while too few beams may result in under-utilization of the small model's potential and low effectiveness. Third, efficiently verifying multiple draft sequences in parallel requires a technique that can process and validate multiple beams concurrently. Fourth, addressing the additional memory cost of storing multiple key-value caches is crucial to enable LLMs to use beam sampling in practice.

To address these challenges, we propose dynamic-width speculative beam decoding (\method{}) that combines speculative decoding with beam sampling through a series of innovations. First, we introduce a draft and verification scheme that processes beam decoding trajectories as forests of trees, which are verified layer by layer by the large model. This approach allows us to efficiently generate multiple beams while maintaining the large model's sampling distribution. Second, we propose a mechanism to dynamically adjust the number of beams based on the context, ensuring a balance between efficiency and effectiveness. Third, we extend existing tree-based parallel verification techniques~\cite{miao2023specinfer} to operate on multiple trees, incorporating a forest-based parallel verification strategy that enhances the speed of the verification process. Finally, we introduce a simple modification to \method{} that reduces the memory cost by storing only one set of key-value caches, while still delivering better output quality than multinomial sampling. 

Experimental results show that our approach achieves a $1.5$-$1.9\times$ speed-up and $1.8$-$2.5\times$ smaller energy consumption than beam sampling, without sacrificing performance on downstream tasks. Besides, it can produce significantly higher-quality outputs than speculative decoding, while maintaining comparable time, memory, and energy costs. These findings suggest that \method{} successfully bridges the gap between speculative decoding and beam sampling, providing a more efficient and effective decoding method for LLMs. Our code is open source\footnote{\url{https://github.com/ZongyueQin/DSBD}}.

\section{Preliminaries}

\begin{figure*}[ht]
    \centering
\begin{subfigure}{0.48\linewidth}
    \includegraphics[width=0.95\linewidth]{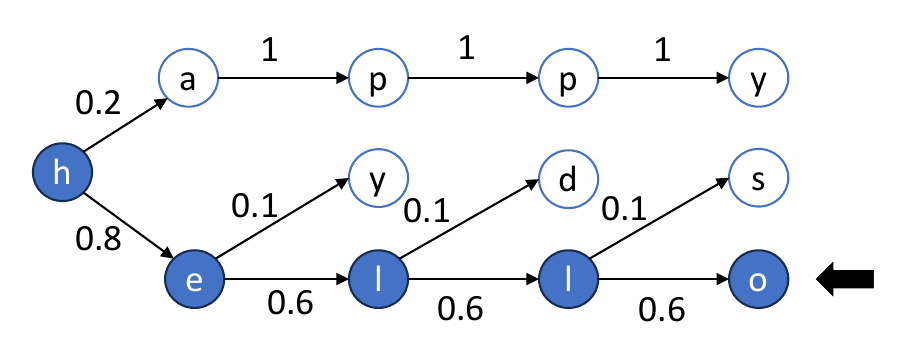}
    \caption{Multinomial sampling.}
\end{subfigure}
\begin{subfigure}{0.48\linewidth}
    \includegraphics[width=0.95\linewidth]{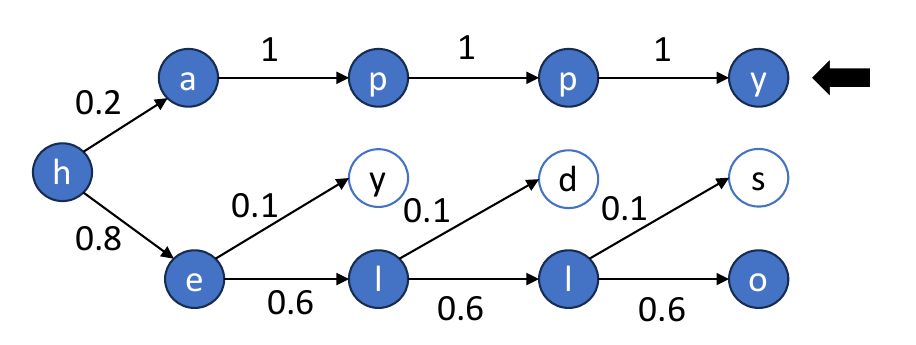}
    \caption{Beam sampling with 2 beams. 
    }
\end{subfigure}
     \caption{Examples of greedy and beam sampling. Some nodes are omitted in the figures. Assume the sampling probability is warped to always sample the tokens with the largest probabilities. Given $prefix$ ``\texttt{h}'', multinomial sampling generates ``\texttt{hello}'' with an average perplexity of 1.55. Beam sampling generates ``\texttt{happy}'' with an average perplexity of 1.49. 
     }
    \label{fig:greedy-beam-sampling}
\end{figure*}

\subsection{Decodings of LLMs}

Let $p$ denote the distribution defined by a large language model $M_p$. Given an input prefix, the optimal decoding algorithm is to generate a sequence of $N$ tokens with maximum likelihood $p(x_{1:N}|input)$. 

\paragraph{Multinomial Sampling.} Multinomial sampling, also known as standardized sampling, samples the next token $x_t$ based on $\mathcal{T}\circ p(\cdot|x_{1:t-1},input)$, where $\mathcal{T}$ is a warping operation applied to enhance the high probability region. Some common warping operations include \textit{top-k} warping, which limits the selection to the top $k$ tokens, and \textit{top-p} warping, where tokens are sampled from the smallest possible subset of the vocabulary whose cumulative probability mass exceeds a specified threshold $p$. 
The deterministic version of multinomial sampling is a special case when $k=1$. 

\paragraph{Beam Sampling.} 
Beam decoding aims to do a better job than multinomial sampling. 
For each position $t$ ($1\le t\le N$), it maintains $W>1$ candidate sequences, which are also called \emph{beams}. 
Assume we have already kept the $W$ sequences $\mathcal{I}_{t-1}=\{x_{1:t-1}^{(1)},\ldots,x_{1:t-1}^{(W)}\}$ at position $t-1$, $W$ sequences with length $t$ are then sampled from $\mathcal{T}\circ p_{beam}$, where $p_{beam}$:$\mathcal{I}_{t-1}\times V\rightarrow [0,1]$ is the beam sampling probability:
\begin{equation}
    p_{beam}(x_{1:t-1}^{(i)},x_t)=\frac{p(x_{1:t-1}^{(i)},x_t|input)}{\sum_{x_{1:t-1}^{(j)},x^\prime_t\in \mathcal{I}_{t-1}\times V }p(x_{1:t-1}^{(j)},x_t^\prime|input)}
\end{equation}
Notice that $p(x_{1:t-1}^{(i)},x_t|input)=p(x_t|x_{1:t-1}^{(i)},input)\cdot p(x_{1:t-1}^{(i)}|input)$. In practice, beam sampling stores the likelihood $p(x_{1:t-1}^{(i)}|input)$ for each beam, and the computation complexity of $p_{beam}$ is $O(W\cdot|V|)$. In deterministic beam sampling, the top $W$ sequences with the highest likelihood $p_{beam}(x_{1:t})$ will be kept. 

\cite{shi2024thorough} shows that beam sampling in general has better downstream effectiveness than multinomial sampling. Figure \ref{fig:greedy-beam-sampling} shows an example where beam decoding returns a better output.

\subsection{Vanilla Speculative Decoding~\label{sec:reject_sampling}}

Speculative decoding utilizes a small model to generate the next $\gamma$ tokens and then employs the large model to verify these drafted tokens \emph{in parallel}. The process is summarized as follows:

\begin{enumerate}
    \item Given $input$, the small model samples $\gamma$ draft tokens $x_1, \ldots, x_\gamma$ using greedy decoding, based on the warped predicted conditional probability $\tilde{q}(x_t|x_{1:t-1
},input)$ for $t=1, \ldots, \gamma$, where $\tilde{q} = \mathcal{T} \circ q$ and $q$ is the small model's output distribution.
\item The large model verifies the draft tokens in parallel by computing the conditional probability $\tilde{p}=\mathcal{T}\circ p(x_t|x_{1:t-1
},input)$ for $t=1, \ldots, \gamma$.
\item Each draft token $x_t$ is accepted with probability $\min(1, \tilde{p}(x_t)/\tilde{q}(x_t))$. The draft tokens before the first rejected token are kept as the decoding output. An additional token is sampled from a residual distribution as a correction for the first rejected token. The accepted tokens and the resampled token are then appended to the context $prefix$ as input for the next iteration.

\item Repeat steps 1-3 until reaching the stopping criteria, such as a length limit.

\end{enumerate}

By verifying $\gamma$ tokens in parallel with one run of the large model, speculative decoding reduces the time cost compared to calling the large model $\gamma$ times. Additionally, although the small model still runs in an autoregressive manner, its inference speed is much faster than the large model. This makes speculative decoding an effective method to accelerate the inference process of LLMs. Moreover, it has been proven that each token $x_t$ generated by speculative sampling follows the identical sampling distribution as multinomial sampling.

\section{Methodology}

\begin{figure*}
    \centering
    \begin{subfigure}{0.32\textwidth}
        \centering
        \includegraphics[width=0.99\linewidth]{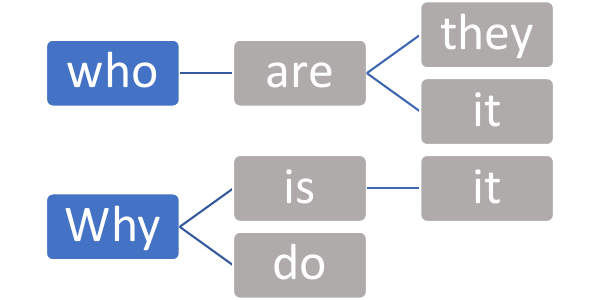}
        \caption{Draft forest from the small model.}
        \label{fig:SBD_L0}
    \end{subfigure}
    \begin{subfigure}{0.32\textwidth}
        \centering
        \includegraphics[width=0.99\linewidth]{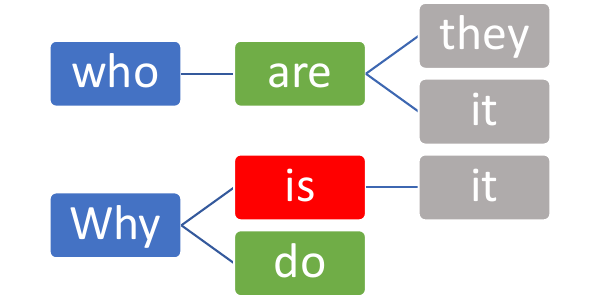}
        \caption{Verification result of the 1st draft layer.}
        \label{fig:SBD_L1}
    \end{subfigure}
    \begin{subfigure}{0.32\textwidth}
        \centering
        \includegraphics[width=0.99\linewidth]{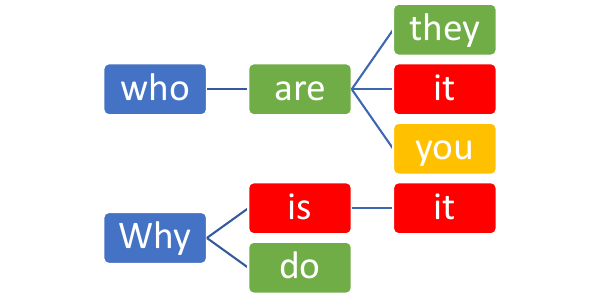}
        \caption{Verification result of the 2nd draft layer.}
        \label{fig:SBD_L2}
    \end{subfigure}
    \caption{Illustration of one iteration of Speculative Beam Decoding. (a) Draft Stage: given the input beams ``\textit{who}'' and ``\textit{why}'', the small model first generates a trace of beam sampling. (b)(c): Verification Stage. When verify the first draft layer, ``\textit{who are}'' and ``\textit{why do}'' are accepted, while ``\textit{why is}'' is rejected. When verify the second draft layer, ``\textit{why is it}'' is directly rejected because its parent is rejected. Then ``\textit{who are they}'' is accepted, while ``\textit{who are it}'' is rejected. And another beam ``\textit{who are you}'' is sampled from the residual distribution.}
    \label{fig:SBD}
\end{figure*}

The primary goal of our method is to enhance the efficiency and effectiveness of large language model (LLM) inference by combining the speed advantages of speculative decoding with the accuracy and diversity benefits of beam sampling. 
We first introduce a novel draft and verification scheme that keeps identical distribution as beam sampling. Then, we describe an adaptive beam management strategy. Next, we illustrate a forest-based parallel verification mechanism. Finally, we discuss how to resolve the additional memory cost inherent in beam sampling.

\subsection{Draft and Verification Scheme}

\subsubsection{Overview}
As illustrated in Figure \ref{fig:SBD}, the core idea of our method is to leverage a smaller, auxiliary model to generate multiple draft sequences, referred to as draft beams, which are then verified and refined by the larger model. This approach enables us to maintain multiple candidate sequences throughout the decoding process, thereby achieving better output quality than multinomial sampling, while improving the overall efficiency of beam sampling.

For now, assume that the number of beams (also referred to as the width, denoted as $W_L$) is fixed. In each iteration of our method, the input consists of the beams generated in the previous iteration. For the first iteration, the input is the initial input context. 
At each iteration, our method first uses the small model to perform beam sampling with a width of $W_S$ for $\gamma$ steps. Notice that we want $W_S>W_L$ because some draft beams might be rejected later. As illustrated in Figure \ref{fig:SBD_L0}, it generates a trajectory that can be represented as a forest consisting of $W_L$ trees, which we refer to as the ``draft forest''. In this forest, each tree originates from an input beam, with the maximum depth of each tree being $\gamma + 1$. Starting from the second layer, each layer of the forest contains $W_S$ nodes, representing the intermediate beams at each step of the beam sampling process.

Once the draft forest is generated, our method leverages the large model to predict the distribution for the next token of each node (beam) in parallel. Using these distributions, \method{} then verifies each layer of the draft forest sequentially. For each layer, it calculates the joint probability of the beams and sequentially determines whether each beam should be accepted. If $W_L$ beams are accepted in a given layer, the remaining beams are discarded, and the method is moved on to verify the next layer. If fewer than $W_L$ beams are accepted in layer $l$, the method rejects this layer and terminates the verification process.

When verification ends, either because it is terminated or because there are no more layers to verify, our method samples an additional layer with $W_L$ beams. This additional layer either corrects the first rejected layer or adds a new layer if all draft layers are accepted. The output beams from this additional layer then serve as the input beams for the next iteration, continuing until the stopping criteria are met (e.g., reaching the maximum number of tokens).

This approach allows each run of the large model to produce at least one, and possibly multiple, steps of beam sampling. Previous studies have shown that memory operations during LLM runs contribute significantly to both runtime and energy consumption~\cite{leviathan2023fast, allen2016characterizing, chen2011tree}. By generating multiple tokens in a single run, \method{} reduces the number of memory operations required, which in turn improves both the speed and the energy efficiency of LLM inference.


\subsubsection{Details}
Let $p$ denote the output distribution of the large model and $q$ denote the distribution of the small model. 
We will start by explaining how to verify the first draft layer (which is the second layer of the draft forest) during each iteration. 

Let $\mathcal{I}=\{x_{1:t}^{(1)},\cdots,x_{1:t}^{(W_L)}\}$ represent the input beams, and $\mathcal{S}=\{x_{1:t+1}^{(1)},\cdots,x_{1:t+1}^{(W_S)}\}$ represent the draft beams in the first layer of the draft forest. Note that $x_{1:t+1}^{(i)}$ is sampled from the distribution $q_{beam}(x_{1:t+1}^{(i)})= \mathcal{T}\circ\frac{q({x_{1:t+1}^{(i)}})}{\mathcal{Q}}$, where $\mathcal{T}$ denotes the warping operation and $\mathcal{Q}=\sum_{x_{1:t+1}\in \mathcal{I}\times V}q(x_{1:t+1})$. 
Similarly, let $p_{beam}$ denote the beam sampling distribution of the large model, we have $p_{beam}(x_{1:t+1}^{(i)})= \mathcal{T}\circ\frac{p({x_{1:t+1}^{(i)}})}{\mathcal{P}}$, where $\mathcal{P}=\sum_{x_{1:t+1}\in \mathcal{I}\times V}p(x_{1:t+1})$. 

During verification, our method starts by setting $p^\prime=p_{beam}$. For each draft beam $x_{1:t}^{(i)}$, \method{} accepts it with probability $\min(1,\frac{q_{beam}(x_{1:t}^{(i)})}{p^{\prime}(x_{1:t}^{(i)})})$. If $x_{1:t}^{(i)}$ is rejected, the method updates $p^\prime$ by setting it to $norm(\max(0, p^{\prime}-q_{beam}))$, where $norm$ denotes the normalization operation. Then it continues to verify the next draft beam with the updated $p^\prime$. If the beam is accepted, $p^{\prime}$ is reset to $p_{beam}$. If $W_L$ draft beams have already been accepted in this layer, the method will reject all remaining beams.

Now we illustrate how to verify the second draft layer. The difference is that some beams in the first layer have already been rejected. In this case, all the beams stem from the rejected beams are directly rejected. For the remaining beams, \method{} applies the same verification process as above.

If all layers in the draft forest have $W_L$ accepted beams, the method proceeds to sample an additional layer with $W_L$ beams directly from $p_{beam}$. However, if at any layer $l$ fewer than $W_L$ beams are accepted, the method will first sample one beam from the adjusted distribution $p^{\prime}$. If the number of accepted beams still falls short of $W_L$, additional beams will be sampled from the original distribution $p_{beam}$ to meet the required number.

\begin{theorem}
    \textbf{Correctness of Draft and Verification Scheme}.
    Let $\mathcal{I}=\{x_{1:t}^{(1)},\cdots,x_{1:t}^{(W_L)}\}$ denote input beams, $\mathcal{S}=\{x_{1:t+1}^{(1)},\cdots,x_{1:t+1}^{(W_S)}\}$ denote draft beams, and $\mathcal{O}=\{\tilde{x}_{1:t+1}^{(1)},\cdots,\tilde{x}_{1:t+1}^{(W_L)}\}$ denote the output beams obtained by our algorithm. We have $\tilde{x}_{1:t+1}^{(i)}\overset{\mathrm{iid}}{\sim} p_{beam}$ ($\forall i=1,\ldots,W_L$), where $p_{beam}(x_{1:t+1}^{(i)})= \mathcal{T}\circ(p({x_{1:t+1}^{(i)}})/\mathcal{P})$, $\mathcal{P}=\sum_{x_{1:t+1}\in \mathcal{I}\times V}p(x_{1:t+1})$. 
\end{theorem}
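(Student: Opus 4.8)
The plan is to reduce the theorem to two ingredients: a single-beam correctness identity (the standard speculative-sampling argument, generalized to the chained residual used here) and a decomposition of the layer-verification procedure into independent \emph{runs}, each emitting exactly one output beam. Throughout I use that the draft beams are sampled i.i.d.\ from $q_{beam}$ (multinomial beam sampling with replacement), and that the acceptance probability $\min(1, p'(x)/q_{beam}(x))$ is paired with the residual update $p' \leftarrow norm(\max(0, p'-q_{beam}))$, so that the probability mass accepted at a value $x$ equals $\min(p'(x), q_{beam}(x))$ (this is the direction consistent with the vanilla scheme of Section~\ref{sec:reject_sampling}). The algebraic engine is the pointwise identity $\min(p'(x), q_{beam}(x)) + \max(0, p'(x)-q_{beam}(x)) = p'(x)$.

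First I would isolate a single-beam lemma: if a run starts fresh with target $p'=p_{beam}$ and is fed the next i.i.d.\ draft beams $z_1, z_2, \dots \sim q_{beam}$, accepting $z_j$ with probability $\min(1, p'(z_j)/q_{beam}(z_j))$ and on rejection updating $p' \leftarrow norm(\max(0, p'-q_{beam}))$, then the beam eventually emitted --- the first accepted $z_j$, or, if the drafts are exhausted, a fresh sample from the current $p'$ --- is distributed exactly as $p_{beam}$. I would prove this by induction on the number $n$ of available draft beams. The base case $n=0$ samples directly from $p'=p_{beam}$. In the inductive step, processing $z_1$ contributes accepted mass $q_{beam}(x)\min(1, p_{beam}(x)/q_{beam}(x)) = \min(p_{beam}(x), q_{beam}(x))$ to the output value $x$; conditioned on rejection (total probability $\sum_{x'}\max(0, p_{beam}(x')-q_{beam}(x'))$), applying the hypothesis with the residual target $norm(\max(0, p_{beam}-q_{beam}))$ and $n-1$ drafts contributes exactly $\max(0, p_{beam}(x)-q_{beam}(x))$. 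Summing through the identity above gives $p_{beam}(x)$.

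Second I would lift this to all $W_L$ beams by induction on $W_L$, exploiting that $p'$ is reset to $p_{beam}$ after every acceptance and that the draft pool is i.i.d. The first output beam is produced exactly by the single-beam lemma, hence is $\sim p_{beam}$; I then split on how it arose. If it is an accepted draft beam $z_{i_1}$, then conditioned on the entire first run (its accept/reject pattern, the consumed drafts, and the emitted value) the untouched drafts $z_{i_1+1}, z_{i_1+2}, \dots$ remain i.i.d.\ $q_{beam}$ and the state is reset, so the procedure producing the remaining $W_L-1$ beams is an independent copy of the full procedure with $W_L-1$ beams; the induction hypothesis yields $W_L-1$ i.i.d.\ $p_{beam}$ beams independent of the first. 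If instead the drafts ran out (the first beam came from the residual $p'$), the scheme samples every remaining beam directly from $p_{beam}$, which are trivially i.i.d.\ $p_{beam}$ and independent. The early-stop case, where $W_L$ drafts are accepted before exhaustion, is the same recursion terminating with all emissions being accepted drafts. In every branch the $W_L$ emitted beams are i.i.d.\ $p_{beam}$.

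The hard part will be the independence argument in the second step: since the number of drafts consumed per run is random and the runs share one finite i.i.d.\ pool, I must justify that conditioning on a completed run leaves the remaining drafts i.i.d.\ $q_{beam}$ and that the reset erases all dependence on previously emitted values --- essentially a strong-Markov / fresh-start property of an i.i.d.\ sequence stopped at the first acceptance. I would also verify that neither stopping rule (reaching $W_L$ acceptances early, or exhausting the pool) biases the emitted values, and flag the i.i.d.\ draft assumption explicitly, since the chained residual reuses the \emph{same} $q_{beam}$ at each step and the argument would break under without-replacement drafting.
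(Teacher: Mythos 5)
Your proposal is correct and follows essentially the same route as the paper's proof in Appendix~A: a single-output-beam lemma established via the telescoping identity $\min(p',q_{beam})+\max(0,p'-q_{beam})=p'$ (the paper telescopes backward from $i=W_S$ where you induct forward on the remaining draft count, but these are the same computation), followed by induction over the $W_L$ output beams using the reset of $p'$ to $p_{beam}$ after each acceptance. Your explicit treatment of the fresh-start/independence of the unconsumed i.i.d.\ draft pool, and your flagging of the with-replacement drafting assumption and of the inverted acceptance ratio in the main text, are more careful than the paper's one-sentence assertion of independence, but they do not change the argument.
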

The proof is illustrated in~\cite{qin2024dynamic}.

\begin{algorithm}[t!]
\caption{Draft and Verification for Speculative Beam Sampling}
\label{alg:draft_verify}
\begin{algorithmic}[1]
\STATE \textbf{Input:} Draft Forest with $\gamma$ draft layers, Small model distribution $q$, Large model distribution $p$, Beam width $W_L$, $W_S$.
\STATE \textbf{Output:} Verified beams for the next iteration
\STATE $l_{last}\leftarrow \gamma+1$ 
\FOR{$l=1,\ldots,\gamma$}
    \STATE \COMMENT{$\mathcal{I}^{(l)}$ is the beams of layer $l-1$ in the forest.}
    \STATE $\mathcal{I}^{(l)}\leftarrow$input beams of layer $l$.  
    \STATE \COMMENT{$\mathcal{S}^{(l)}$ is the beams of layer $l$ in the forest.}
    \STATE $\mathcal{S}^{(l)}\leftarrow$draft beams of layer $l$.
    \STATE \COMMENT{remove beams stem from beams rejected in the last layer}
    \STATE $\mathcal{S}^{(l)}\leftarrow \{x_{1:t+1}^{(l,i)}|x_{1:t+1}^{(l,i)}\in \mathcal{S}^{(l)}, x_{1:t}^{(l,i)} \text{is not rejected}\}$
    \STATE \COMMENT{$t+1$ is the length of sequence in $\mathcal{S}^{(l)}$, $t=l-1$.}
    \STATE compute $p_{beam}^{(l)}$ based on next-token distributions $p$
    \STATE $p^\prime\leftarrow p_{beam}^{(l)}$
    \STATE Compute $W_L^{(l)}$ based on Eq \ref{eq:prob1} - Eq. \ref{eq:WL}
    \FOR{$x_{1:t+1}^{(l,i)}\in \mathcal{S}^{(l)}$}
        \STATE $r\leftarrow U(0,1)$
        \IF{$r\le \frac{q_{beam}^{(l)}(x_{1:t+1}^{(i)})}{p^\prime(x_{1:t+1}^{(i)})}$}
            \STATE accept $x_{1:t+1}^{(l,i)}$
            \STATE $p^\prime\leftarrow p_{beam}^{(l)}$
        \ELSE
            \STATE reject $x_{1:t+1}^{(l,i)}$
            \STATE $p^\prime\leftarrow \text{norm}(\max(0, p^\prime - q_{beam}^{(l)}))$
        \ENDIF
        \IF{$W_L^{(l)}$ beams are accepted}
            \STATE reject remaining beams
            \STATE \textbf{break}
        \ENDIF
    \ENDFOR
    \IF{less than $W_L^{(l)}$ beams are accepted}
        \STATE sample $x_{1:t+1}\sim p^\prime$ and add it to accepted beams
        \WHILE {less than $W_L^{(l)}$ beams are accepted}
            \STATE sample $x_{1:t+1}\sim p_{beam}^{(l)}$ and add it to accepted beams
        \ENDWHILE
        \STATE $l_{last}\leftarrow l$ 
        \STATE \textbf{break}
    \ENDIF
\ENDFOR
\IF {$l_{last} == \gamma+1$}
    \STATE compute $p_{beam}^{(\gamma+1)}$
    \STATE sample $W_L$ beams from $p_{beam}^{(\gamma+1)}$ 
\ENDIF
\STATE \RETURN accepted beams at the layer $l_{last}$
\end{algorithmic}
\end{algorithm}


\subsection{Dynamic-Width Speculative Beam Decoding}

The draft and verification scheme described above ensures that our method matches the sampling distribution of beam sampling. However, it has a limitation: the beam width $W_L$ remains fixed across all layers. While this fixed width works well for standard beam sampling, it is not suitable for our method. The key challenge is that the discrepancy between the small model's predictions ($q_{beam}$) and the large model's true distribution ($p_{beam}$) vary from token to token. In some layers, $q_{beam}$ closely aligns with $p_{beam}$, resulting in a high acceptance rate. In other layers, the gap is much wider, leading to a lower acceptance rate. 

To address this variability, the decoding algorithm should dynamically adjust the number of beams it expects to accept based on the alignment between $q_{beam}$ and $p_{beam}$. By doing so, it can (1) reduce the target width for challenging layers, preventing the entire layer from being rejected and thus maintaining progress, and (2) increase the target width for easier layers, enhancing the exploration of diverse sequences and reducing the risk of getting trapped in local optima. This adaptive approach would optimize the balance between efficiency and accuracy, making the decoding process more robust and effective.
So we propose a self-adjusting method where the target width $W_L^{(l)}$ for layer $l$ is determined based on the context of that layer.

Let $P_{p,q}^{(l)}(m,k)$ represent the probability that $k$ out of $m$ draft beams are accepted at the $l$-th layer. This probability is computed using the following recursive equation:
\begin{equation}
    P_{p,q}^{(l)}(m,k)=\sum_{i=1}^{m}\tilde{P}^{(l)}_{p,q}(m,i)P_{p,q}(m-i,k-1))
    \label{eq:prob1}
\end{equation}
Here, $\tilde{P}^{(l)}_{p,q}(m,i)$ is the probability that the $i$-th beam is the first to be accepted among the $m$ draft beams:
\begin{equation}
    \tilde{P}^{(l)}_{p,q}(m,i)=\alpha_i^{(l)}\prod_{j=1}^{i-1}(1-\alpha_j^{(l)})
    \label{eq:prob2}
\end{equation}
where $\alpha_j^{(l)}$ is the probability that the $j$-th beam is accepted, given that all previous beams (from the 1st to the $(j-1)$-th) were rejected.
\begin{equation}
\alpha_j^{(l)}=\sum q_{beam}\min(p_j^{(l)}/q_{beam}^{(l)},1)
\end{equation}
where $p_1^{(l)}=p_{beam}^{(l)}$, $p_k^{(l)}=norm(\max(p_{k-1}^{(l)}-q^{(l)}_{beam},0))$.    

Using these equations and the fact that $P^{(l)}_{p,q}(m,k) = 0$ if $k > m$ and $P^{(l)}_{p,q}(0,0)=1$, we can calculate the probability that at least $K$ beams are accepted at the $l$-th layer as: 
\begin{equation}
 1-\sum_{k=1}^{K-1}P_{p,q}^{(l)}(M_S,k)
 \label{eq:intro_WL}
\end{equation}
Finally, the width $W_L^{(l)}$ for the $l$-th layer is set based on Eq \ref{eq:intro_WL}, ensuring that it is not less than a minimum width $W_{min}$:
\begin{equation}
 W_L^{(l)}=\max(W_{min}, \tilde{W}_L^{(l)}(t))   
 \label{eq:WL}
\end{equation}
In this expression, $t \in [0,1]$ is a pre-defined threshold. The value of $\tilde{W}_L^{(l)}(t)$ is computed as follows:
\begin{equation}
    \tilde{W}_L^{(l)}(t) =\max\{K\in\mathbb{N}|1-\sum_{k=0}^{K-1}P_{p,q}^{(l)}(M_S,k)\ge t\}
\end{equation}
This formula gives us the maximum width $\tilde{W}_L^{(l)}(t)$ such that the probability of accepting at least $\tilde{W}_L^{(l)}(t)$ beams at the $l$-th layer is greater than or equal to the threshold $t$. Eq \ref{eq:WL} ensures that the width is dynamically adjusted to maintain a high likelihood of accepting a sufficient number of beams, while also ensuring that it does not fall below the minimum width $W{min}$. Algorithm \ref{alg:draft_verify} illustrates the pseudocode for the draft and verification scheme.

Let $\beta^{(l)}_{W_{min}}=\sum_{k=W_{min}}^{W_S}P^{(l)}_{p,q}(W_S,k))$, which is the probability that at least $W_{min}$ beams are accepted at layer $l$. Based on the definition of $W_L^{(l)}$, the probability that layer $L$ is accepted is $\min(t, \beta^{(l)}_{W_{min}})$. So $t$ and $W_{min}$ both control the average acceptance rate of our algorithm, and hence determine efficiency. Let $\bar{\beta}=\mathbb{E}\beta^{(l)}_{W_{min}}$, we have the following theorem for the efficiency of \method{}.

\begin{theorem}
    The expected number of steps generated per iteration is $\frac{1-\min(t,\beta)^{\gamma+1}}{1-\min(t,\beta)}$.
\end{theorem}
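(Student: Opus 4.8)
The plan is to identify the number of beam-sampling steps produced in one iteration as a single integer-valued random variable $N=l_{last}$ and to compute its expectation through the tail-sum identity $\mathbb{E}[N]=\sum_{k\ge 1}\Pr[N\ge k]$, which reduces the statement to summing a geometric series. First I would pin down how $N$ relates to the verification loop of Algorithm \ref{alg:draft_verify}. Each accepted draft layer advances the beams by one position, i.e. contributes one step. If the first rejection occurs at layer $l$ (with $1\le l\le\gamma$), the algorithm sets $l_{last}\leftarrow l$ after sampling the correcting layer, so exactly $N=l$ steps are produced; if instead all $\gamma$ layers are accepted, $l_{last}$ retains its initial value $\gamma+1$ and one extra layer is sampled from $p_{beam}^{(\gamma+1)}$, giving $N=\gamma+1$. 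Hence $N$ ranges over $\{1,\ldots,\gamma+1\}$, taking the value of the first rejected layer's index when a rejection occurs and $\gamma+1$ when every draft layer is accepted.

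Next I would use the fact, established in the text preceding the theorem, that layer $l$ is accepted with probability $\min(t,\beta^{(l)}_{W_{min}})$, and work with the effective per-layer acceptance probability $a:=\min(t,\beta)$ where $\beta=\bar\beta$. The key combinatorial step is that $\{N\ge k\}$ is exactly the event that layers $1,\ldots,k-1$ are all accepted, for every $k\in\{1,\ldots,\gamma+1\}$: reaching the $k$-th step requires surviving every earlier layer, and nothing else. Treating the layer-acceptance events as independent, this gives $\Pr[N\ge k]=a^{k-1}$. Substituting into the tail-sum identity yields
\begin{equation*}
\mathbb{E}[N]=\sum_{k=1}^{\gamma+1}\Pr[N\ge k]=\sum_{j=0}^{\gamma}a^{j}=\frac{1-a^{\gamma+1}}{1-a},
\end{equation*}
which is the claimed expression with $a=\min(t,\beta)$.

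The main obstacle, and the point I would treat most carefully, is the passage from the layer-dependent acceptance probabilities $\min(t,\beta^{(l)}_{W_{min}})$ to the single rate $\min(t,\beta)$. Two issues arise: the acceptance events across layers must be (assumed) independent so that $\Pr[N\ge k]$ factorizes into a clean product, and the per-layer probabilities must be homogenized into one value. Since $x\mapsto\min(t,x)$ is concave, replacing each $\beta^{(l)}_{W_{min}}$ by its mean $\bar\beta$ is not exact, so I would either state an i.i.d.-layers modeling assumption (each layer accepted independently with probability $\min(t,\beta)$) or interpret $\beta$ as the effective homogeneous acceptance rate reproducing the product $\prod_l\Pr[\text{layer }l\text{ accepted}]$. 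Once that modeling convention is fixed, the remaining steps — the structural identity for $\{N\ge k\}$ and the geometric summation — are routine.
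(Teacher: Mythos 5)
Your proposal is correct and takes essentially the same route as the paper: the paper's proof simply asserts that the per-layer acceptance rate is $\min(t,\beta)$ and then invokes the corresponding theorem of Leviathan et al., whose underlying derivation is exactly your tail-sum/geometric-series argument over i.i.d.\ layer-acceptance events capped at $\gamma+1$. Your explicit flagging of the homogenization step (that $\min(t,\cdot)$ is concave, so replacing $\beta^{(l)}_{W_{min}}$ by $\bar\beta$ requires an i.i.d.-layers modeling convention) is a caveat the paper leaves implicit, but it does not change the approach.
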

\begin{proof}
    As described above, the average acceptance rate is $\min(t,\beta)$. With the Theorems in~\cite{leviathan2023fast}, we can calculate the average number of generated layers as $\frac{1-\min(t,\beta)^{\gamma+1}}{1-\min(t,\beta)}$.
\end{proof}


\subsection{Forest-based Parallel Decoding
}

As noted in \cite{miao2023specinfer}, efficient management of the key-value cache is crucial to avoid redundant computations when running the large model during verification, which affects overall efficiency. SpecInfer \cite{miao2023specinfer} introduced tree-based parallel decoding, which reuses the same key-value cache and employs a topology-aware causal mask to accelerate the computation of the large model. However, this tree-based parallel decoding approach cannot be directly applied to our algorithm because, unlike SpecInfer, our method retains multiple beams as inputs at each iteration. Although these beams share the same initial input, the tokens generated in each beam can differ significantly as the sequence length increases. As a result, the draft tokens in \method{} form not a single tree but a forest.

So we propose forest-based parallel decoding, an extension of tree-based parallel decoding that accommodates multiple trees. As shown in Figure \ref{fig:forest_attention}, \method{} maintains a separate key-value cache for each input beam. For each beam, we apply tree-based parallel decoding to compute the tree attention across all tokens. Finally, after the iteration ends, \method{} updates the key-value caches according to the output beams. For example, if the output beams in Figure \ref{fig:forest_attention} are $b_5$ and $b_6$, which both originate from $b_1$, then the caches for $b_5$ and $b_6$ are kept for the next iteration.

\subsection{Reducing Memory Cost\label{sec:memory}}

In practice, key-value caches take up a large portion of memory cost for LLM inference~\cite{kang2024gear}. A critical disadvantage of beam sampling is that it has to maintain a separate key-value cache for each beam, significantly increasing the memory cost. But our method can mitigate this issue with a simple modification.
Notice that with the forest-based parallel decoding, the number of key-value caches kept during generation equals the number of input beams. So an effective way to reduce the memory cost of our method is to limit the number of input beams. This can be achieved by selecting only the output beam with the lowest perplexity as the input beam for the next iteration. In this way, only one key-value cache is needed during generation, so the memory cost will be similar to the cost of multinomial sampling and speculative decoding. Notice that although there is only one input beam, more than one beam can be accepted at each layer of the draft forest. Hence, it will be more effective than multinomial sampling.

\begin{figure}
    \centering
    \includegraphics[width=0.9\linewidth]{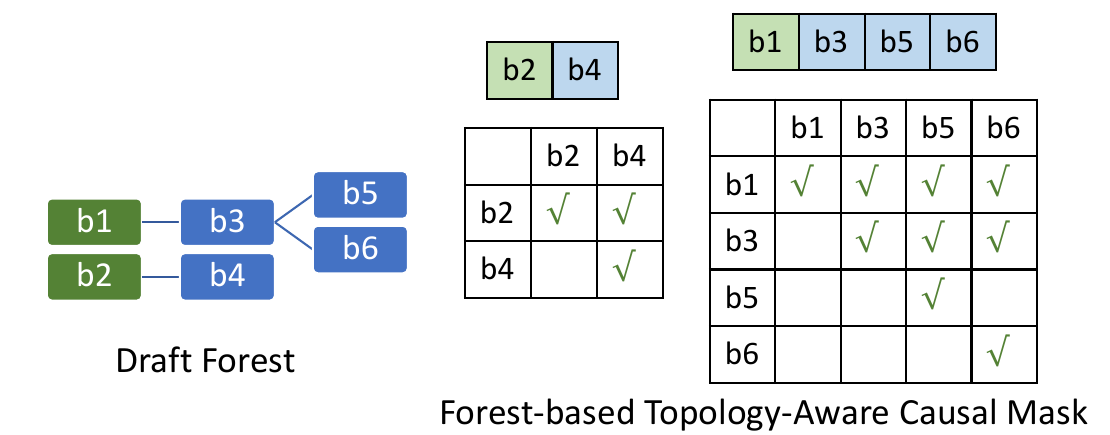}
    \caption{Illustration of forest-based parallel decoding. Given the draft forest, the large model converts the two trees into sequences in depth-first search order and verifies them in parallel with the topology-aware attention mask. Empty cells in the matrices indicate that attention is masked.}
    \label{fig:forest_attention}
\end{figure}

\section{Experiments}

\subsection{Experiment Setups}

\textbf{LLMs}. We evaluate our method using three publicly available LLM families: OPT~\cite{zhang2022opt}, Llama-2 and Llama-3~\cite{touvron2023llama,llama3modelcard}. We use Llama-2-13B, Llama-3.1-8B, and OPT-13B as the large models as they are the largest models our GPU could run. And we use Llama-68M~\cite{miao2023specinfer}
, Llama-3.2-1B, and OPT-125M as the small models. 

\textbf{Datasets}. We use public datasets: SQuAD~\cite{squad}, Spider~\cite{yu2018spider}, and MT-Bench~\cite{zheng2023judging}. SQuAD is a natural language QA dataset using exact match (EM) as the evaluation metric. Spider is a text-to-SQL code dataset that uses execution accuracy (EA) as the metric. MT-bench covers various tasks including writing, roleplay, extraction, stem, humanities, reasoning, math, and coding. It uses GPT-4 to rate the output quality on a scale of 1-10 (the higher the better).\footnote{Additional experiments and reproduction details are available in our arXiv version~\cite{qin2024dynamic}.}

\subsection{Comparison with Beam Sampling}

\begin{figure}[h!]
    \centering
    \begin{subfigure}[b]{0.23\textwidth}
        \centering
        \includegraphics[width=\textwidth]{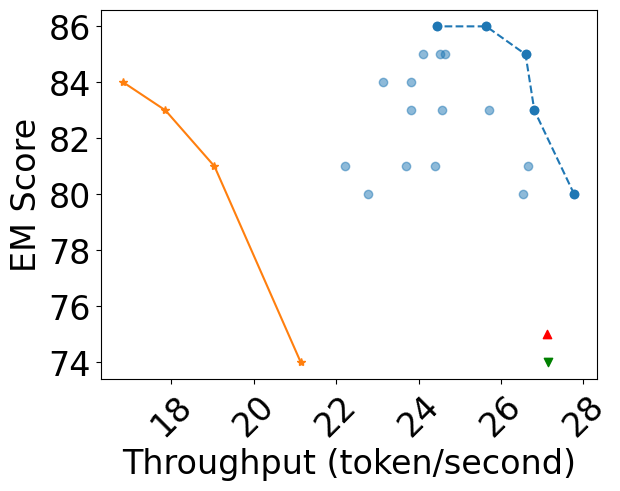}
        \captionsetup{justification=centering}
        \caption{Throughput vs EM Score\\ (Llama-2)}
        \label{fig:llama_squad_speed}
    \end{subfigure}
    \hfill
    \begin{subfigure}[b]{0.23\textwidth}
        \centering
        \includegraphics[width=\textwidth]{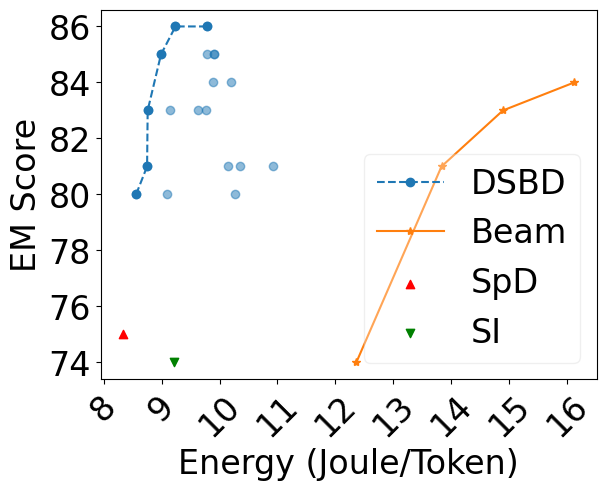}
        \captionsetup{justification=centering}
        \caption{Energy vs EM Score\\ (Llama-2)}
        \label{fig:llama_squad_energy}
    \end{subfigure}
    \begin{subfigure}[b]{0,23\textwidth}
        \centering
        \includegraphics[width=\textwidth]{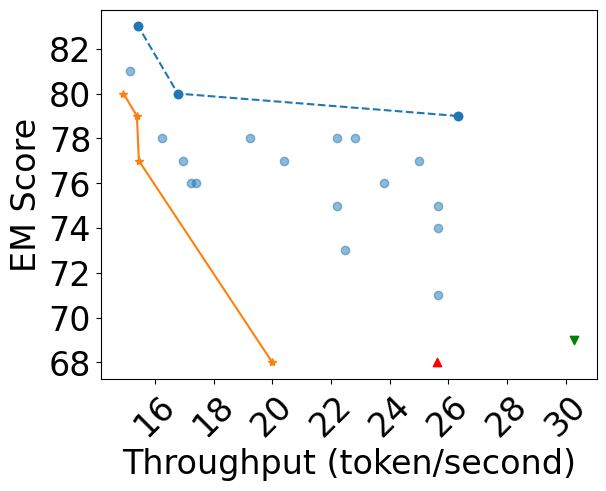}
        \captionsetup{justification=centering}
        \caption{Throughput vs EM Score\\ (Llama-3)}
        \label{fig:L3_squad_speed}
    \end{subfigure}
    \hfill
    \begin{subfigure}[b]{0,23\textwidth}
        \centering
        \includegraphics[width=\textwidth]{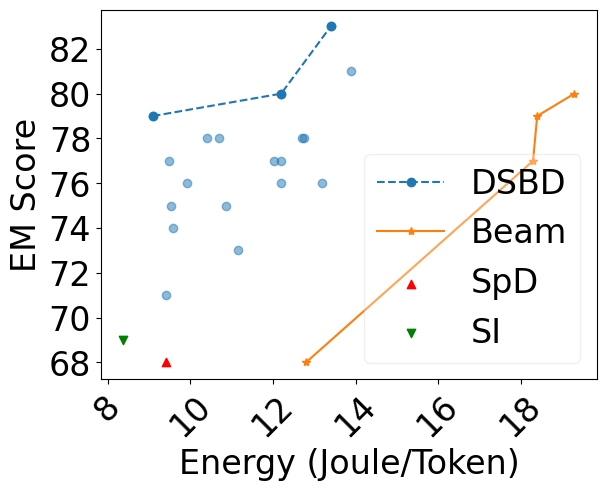}
        \captionsetup{justification=centering}
        \caption{Energy vs EM Score\\ (Llama-3)}
        \label{fig:L3_squad_energy}
    \end{subfigure}
        \begin{subfigure}[b]{0,23\textwidth}
        \centering
        \includegraphics[width=\textwidth]{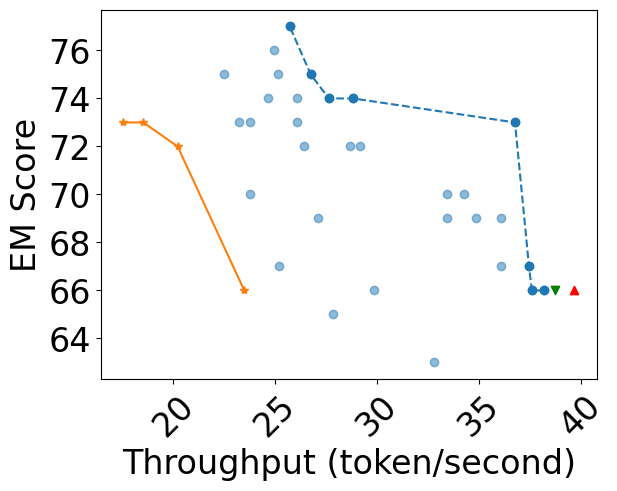}
        \captionsetup{justification=centering}
        \caption{Throughput vs EM Score\\ (OPT)}
        \label{fig:opt_squad_speed}
    \end{subfigure}
    \hfill
    \begin{subfigure}[b]{0,23\textwidth}
        \centering
        \includegraphics[width=\textwidth]{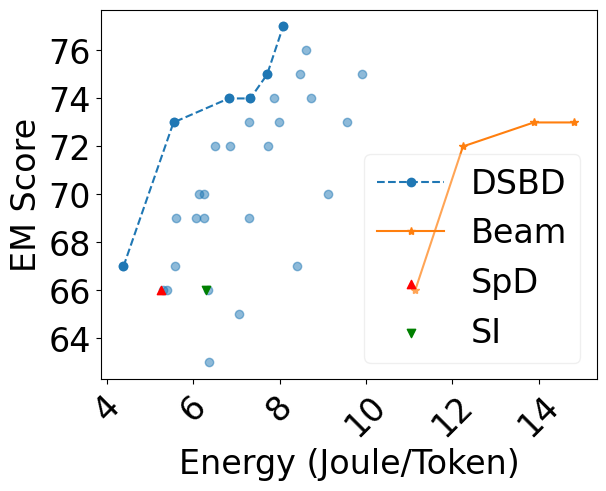}
        \captionsetup{justification=centering}
        \caption{Energy vs EM Score\\ (OPT)}
        \label{fig:opt_squad_energy}
    \end{subfigure}
    \caption{Evaluation on SQuAD. Exact match (EM) is \textbf{higher} the better. The blue points represent performances of \method{} under different parameter settings $(\gamma,W_S,t)$. The blue and yellow lines mark the Pareto fronts of \method{} and beam sampling. (SpD: SpecDecode, SI: SpecInfer)}
    \label{fig:squad}
    \vspace{-6mm}
\end{figure}

We begin by comparing \method{} with beam sampling, focusing on the relationship between efficiency (e.g., energy consumption and throughput) and effectiveness. The width of beam sampling ranges from 1 to 4. When width equals 1, beam sampling is equivalent to multinomial sampling. In addition, we observe the improvement in downstream effectiveness and output perplexity begins to converge when the width reaches around 4. For our method, we vary the draft beam width $W_S\in\{2,3,4,5,6\}$, the threshold $t\in\{0.7,0.9\}$, and set $W_{min}\in\{1,2,3\}$. We also include speculative decoding~\cite{leviathan2023fast} (SpD) and SpecInfer~\cite{miao2023specinfer} (SI) as references.

Figure \ref{fig:squad} and Figure \ref{fig:spider} illustrate the points that mark the performance of different methods under different parameter settings on SQUAD and Spider datasets, respectively. SpD and SpecInfer each have only one point in the figures because they do not offer a trade-off between efficiency and effectiveness. We plot the curves of beam sampling and the Pareto fronts of \method{}. 
Notably, we omit the results of the OPT model on the Spider dataset as its execution accuracy remains consistently close to zero, rendering it uninformative for this analysis. 
The figures demonstrate that \method{} consistently outperforms beam sampling, signifying that it achieves higher quality at any given level of throughput or energy consumption. More importantly, when the effectiveness is fixed, \method{} can be $1.5$-$1.9\times$ faster than beam sampling, while reducing energy consumption by $1.8$-$2.5\times$, as demonstrated by the Pareto fronts of \method{}. 
Table \ref{tab:MT-bench-beam} presents the results on MT-Bench. Due to the cost and time of GPT-4 evaluations, we report results for SpecInfer, beam sampling ($W=5$), and \method{}. \method{} achieves comparable efficiency to SpecInfer while significantly improving output quality. It is also $1.53\times$ faster and $1.54\times$ more energy-efficient than beam sampling. These results highlight \method{}'s advantages in efficiency and effectiveness, making it ideal for real-world applications.

\begin{figure}[h!]
    \centering
    \begin{subfigure}[b]{0,23\textwidth}
        \centering
        \includegraphics[width=\textwidth]{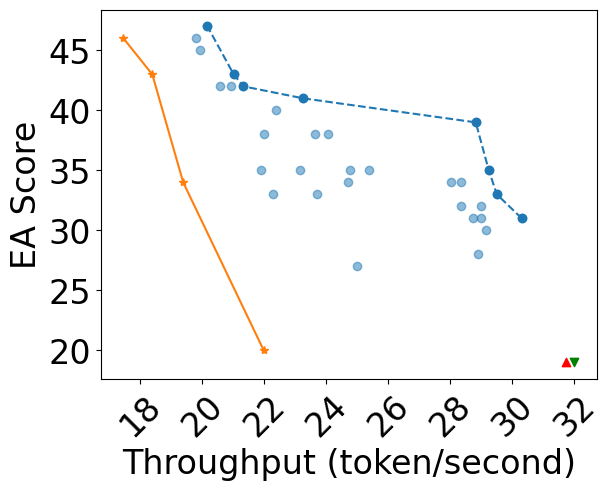}
        \captionsetup{justification=centering}
        \caption{Speed vs EA Score\\(Llama-2)}
        \label{fig:llama_spider_speed}
    \end{subfigure}
    \hfill
    \begin{subfigure}[b]{0,23\textwidth}
        \centering
        \includegraphics[width=\textwidth]{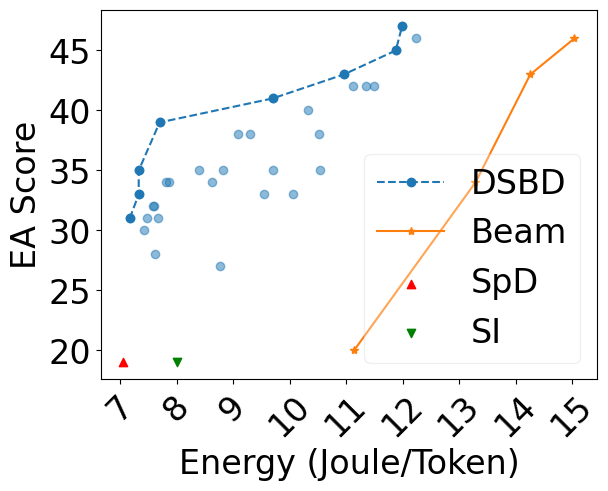}
        \captionsetup{justification=centering}
        \caption{Energy vs EA Score\\(Llama-2)}
        \label{fig:llama_spider_energy}
    \end{subfigure}
       \begin{subfigure}[b]{0,23\textwidth}
        \centering
        \includegraphics[width=\textwidth]{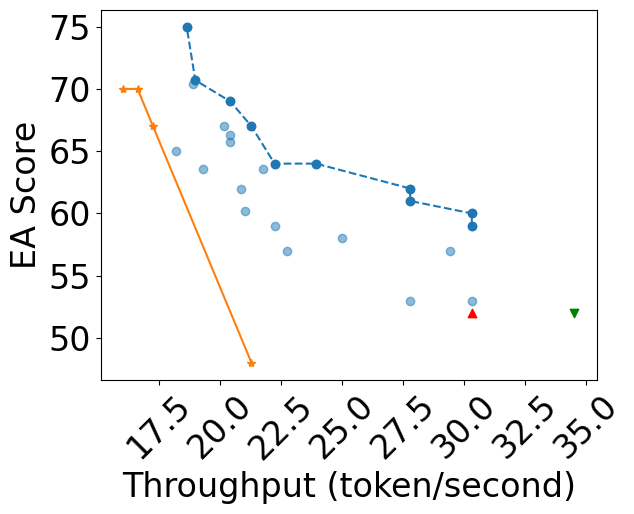}
        \captionsetup{justification=centering}
        \caption{Speed vs EA Score\\(Llama-3)}
        \label{fig:L3_spider_speed}
    \end{subfigure}
    \hfill
    \begin{subfigure}[b]{0,23\textwidth}
        \centering
        \includegraphics[width=\textwidth]{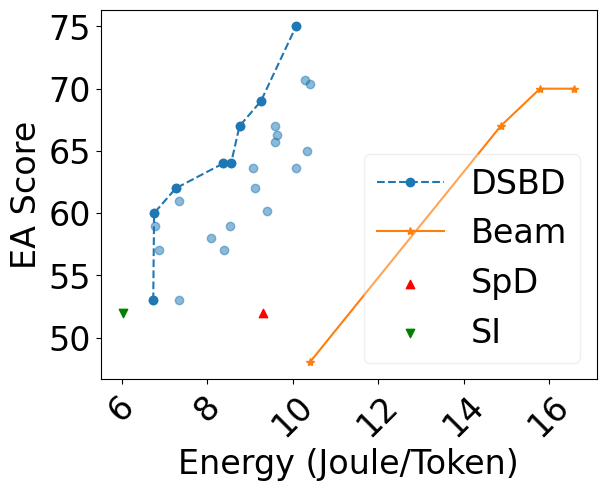}
        \captionsetup{justification=centering}
        \caption{Energy vs EA Score\\(Llama-3)}
        \label{fig:L3_spider_energy}
    \end{subfigure}
    \caption{Evaluation on Spider. Execution accuracy (EA) is \textbf{higher} the better. The blue points represent performances of \method{} under different parameter settings $(\gamma,W_S,t)$. The blue and yellow lines mark the Pareto fronts of \method{} and beam sampling.  (SpD: SpecDecode, SI: SpecInfer)}
    \label{fig:spider}
\end{figure}

\begin{table}[ht]
\centering
\footnotesize
\begin{tabular}{ccccc}
\toprule
\textbf{Model}       & \textbf{Method} & \textbf{Score} & \textbf{Token/s} & \textbf{J/token} \\ \midrule
 & SpecInfer & 2.86 & \textbf{21.8} & \underline{21.2}\\
Llama-2-13B          & Beam ($W$=5)        & \underline{3.51}           & 12.1             & 26.3            \\ 
                     & DSBD            & \textbf{3.52}           & \underline{16.5 }            & \textbf{16.1}            \\ \hline
        & SpecInfer & 3.46 & \textbf{20.2} & \textbf{19.8}\\
Llama-3-8B           & Beam ($W$=5)        & \underline{4.10}           & 10.5             & 33.3            \\ 
                     & DSBD            & \textbf{4.11}           & \underline{17.8}             & \underline{22.9}            \\ 
\bottomrule
\end{tabular}

\caption{Evaluation on MT-Bench with SpecInfer, beam sampling and \method{}.}
\label{tab:MT-bench-beam}
\vspace{-3mm}
\end{table}

\nop{
\begin{figure}[h!]
    \centering
    \begin{subfigure}[b]{0,23\textwidth}
        \centering
        
        \includegraphics[width=\textwidth]{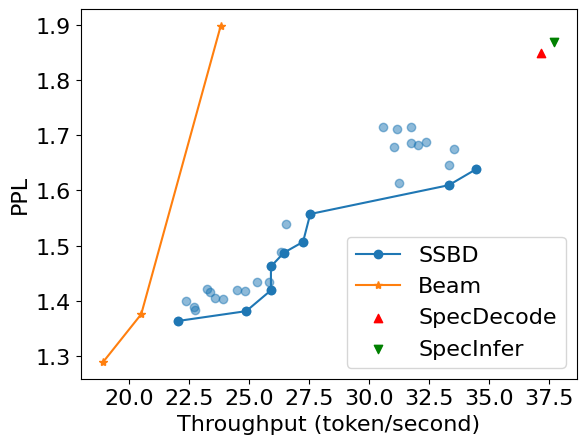}
        \caption{Speed vs Perplexity}
        \label{fig:opt_spider_speed}
    \end{subfigure}
    \hfill
    \begin{subfigure}[b]{0,23\textwidth}
        \centering
        \includegraphics[width=\textwidth]{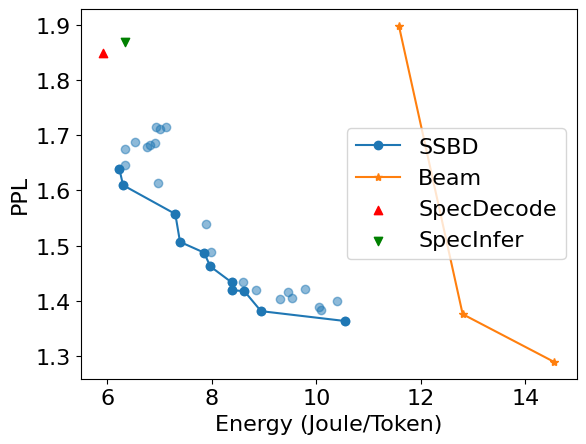}
        \caption{Energy vs Perplexity}
        \label{fig:opt_spider_energy}
    \end{subfigure}
    
    \caption{Evaluation on Spider dataset with OPT models. Perplexity (PPL) is \textbf{lower} the better. The blue points represent performances of \method{} under different parameter settings $(\gamma,W_S,t)$. The blue and yellow lines mark the Pareto fronts of \method{} and beam sampling.}
    \label{fig:opt_spider}
\end{figure}
}

\subsection{Comparison under Memory Constraint}

As discussed in Section \ref{sec:memory}, \method{} can mitigate the memory issue of beam sampling by selecting only one output beam for the next iteration. It allows \method{} to only keep one sequence of key-value cache and to achieve memory usage comparable to that of multinomial sampling.
To assess the performance of \method{} under memory constraints (i.e., only keeps one sequence of key-value cache), we compare it with multinomial sampling, SpD, and SpecInfer, as shown in Table \ref{tab:memory_constraint}. In addition, the \method{} in Table \ref{tab:MT-bench-beam} also only keeps one sequence of key-value cache. The results show that \method{} achieves speed and energy efficiency close to that of SpD. Moreover, \method{} delivers a significant improvement in output quality, far surpassing the baselines in downstream scores. 

\begin{table}[h]
\centering
\footnotesize
\begin{tabular}{lcccc}
\toprule
 & Method & EM/EA & tokens/s & J/token \\
\midrule
 &Multinomial & 74 &  21.14&  12.36\\
      Llama-2 & SpD & 75& 27.11 & \textbf{8.34}\\
      SQuAD & SpecInfer & 74  & \textbf{27.15} & 9.22 \\
                        & \method{}      & \textbf{86} &  26.67 & 8.75 \\
\midrule


 &Multinomial & 20 &  21.98&  11.14\\
      Llama-2 & SpD & 19&  31.74 & \textbf{7.06}\\
      Spider & SpecInfer & 19  & \textbf{32.00} & 8.01 \\
                        & \method{}      & \textbf{31} & 30.30 & 7.17 \\


\bottomrule
\end{tabular}
\caption{Comparison under memory constraints: each method stores key-value caches for only one sequence. 
}
\label{tab:memory_constraint}
\vspace{-3mm}
\end{table}

\section{Related Work}

\textsc{\textbf{Efficient LLM Inference}}. Numerous studies have focused on improving the efficiency of large model inference, including model quantization~\cite{frantar2022gptq,lin2023awq}, model pruning~\cite{gale2019state,sanh2020movement}, and model distillation~\cite{hinton2015distilling}. While these techniques achieve significant speed-ups, they often sacrifice the model's overall effectiveness.
A closely related direction to our work is non-autoregressive decoding, enabling parallel generation of multiple tokens~\citep{gu2017non,wang2019non,sun2019fast,ghazvininejad2019mask,lee2018deterministic,guo2020jointly}. However, these methods typically require extensive retraining of the model and often face challenges in either maintaining model effectiveness or achieving comparable performance without relying on task-specific techniques~\cite{kim2023speculative}.

\textsc{\textbf{Speculative Decoding}}. 
Speculative decoding is initially introduced in~\cite{leviathan2023fast,chen2023accelerating}. More recent works~\cite{sun2023spectr,miao2023specinfer,yang2024multi} extend this concept by allowing the small model to generate multiple draft sequences.
All these methods only maintains a single sequence during generation, making them prone to sub-optimal results. Recently, Andronov et al.~\cite{andronov2024accelerating} proposed a decoding method called ``speculative beam search''. While it retains multiple candidate sequences to handle the chemical synthesis planning task, it does not preserve the same distribution as either beam sampling or multinomial sampling, and their method is fundamentally different from ours.
Another complementary direction to enhance speculative decoding is to improve the effectiveness of the small draft model. A more effective draft model leads to a higher acceptance rate of draft tokens, which in turn accelerates the overall inference process~\cite{kim2023speculative,liu2023online,he2023rest}. EAGLE~\cite{eagle2} and MEDUSA~\cite{cai2024medusa} train additional heads in the target model to generate draft tokens and achieve better acceptance rate.
These works are orthogonal to our work because our algorithm can be directly applied to their draft models.

\section{Conclusion}

This work introduces a novel method that integrates speculative decoding with beam sampling to enhance the efficiency and effectiveness of large language model (LLM) inference. 
Experimental results show that \method{} outperforms beam sampling, achieving a significant speed-up and energy reduction without compromising downstream task performance. 
This work enhances the effectiveness of speculative decoding and opens new avenues for exploration.

\section*{Acknowledgements}

This work was partially supported by NSF grants 2211557, 1937599,  2119643, 2303037, NSF 2312501, SRC JUMP 2.0 PRISM Center, NASA, Okawa Foundation, Amazon Research, Snapchat, and the CDSC industrial partners (https://cdsc.ucla.edu/partners/).

\bibliography{aaai25}

\appendix
\clearpage

\section{Correctness of Speculative Beam Decoding\label{app:correctness}}

Given input beams $\mathcal{I}=\{x_{1:t}^{(1)},\cdots,x_{1:t}^{(W_L)}\}$, and draft beams $\mathcal{S}=\{x_{1:t+1}^{(1)},\cdots,x_{1:t+1}^{(W_S)}\}$ ($W_S> W_L$), let $\mathcal{O}=\{\tilde{x}_{1:t+1}^{(1)},\cdots,\tilde{x}_{1:t+1}^{(W_L)}\}$ denote the output beams obtained by our algorithm (Algorithm \ref{alg:draft_verify}). $\forall i=1,\ldots,W_L$, $\tilde{x}_{1:t+1}^{(i)}\overset{\mathrm{iid}}{\sim} p_{beam}$, where $p_{beam}(x_{1:t+1}^{(i)})= \mathcal{T}\circ(p({x_{1:t+1}^{(i)}})/\mathcal{P})$, $\mathcal{P}=\sum_{x_{1:t+1}\in \mathcal{I}\times V}p(x_{1:t+1})$.

\begin{proof}
    \textbf{Step (1)}. We first prove $\tilde{x}_{1:t+1}^{(1)}\sim p_{beam}$. Let $p_1=p_{beam}$, $p_{k+1}=norm(\max(p_k-q_{beam},0))$ for $k=1,\ldots,W_S-1$.

    Note that we have
    \begin{equation}
     \begin{aligned}
        P(\tilde{x}_{1:t+1}^{(1)}=x_{1:t+1})=&P_{1,1}+P_{2,1}
     \end{aligned}  
    \end{equation}
    where 
    \begin{equation}
        \begin{aligned}
            P_{1,1}=P(&\tilde{x}^{(1)}_{1:t+1}=x_{1:t+1}, x^{(1)}_{1:t+1}\text{ accepted})
        \end{aligned}
    \end{equation}
    \begin{equation}
        \begin{aligned}
            P_{2,1}=P(&\tilde{x}^{(1)}_{1:t+1}=x_{1:t+1}, x^{(1)}_{1:t+1}\text{ rejected})
        \end{aligned}
    \end{equation}
    In addition, for $i>1$ let
    \begin{equation}
        \begin{aligned}
            P_{1,i}=P(&\tilde{x}^{(1)}_{1:t+1}=x_{1:t+1}, x^{(i)}_{1:t+1}\text{ accepted}|\text{previous}\\
            &i-1 \text{ beams all rejected})
        \end{aligned}
    \end{equation}
    \begin{equation}
        \begin{aligned}
            P_{2,i}=P(&\tilde{x}^{(1)}_{1:t+1}=x_{1:t+1}, x^{(i)}_{1:t+1}\text{ rejected}|\text{previous}\\
            &i-1 \text{ beams all rejected})
        \end{aligned}
    \end{equation}
    Notice that
    \begin{equation}
        \begin{aligned}
            P_{1,1}=q_{beam}(x_{1:t+1})\min(1,\frac{p_1(x_{1:t+1})}{q_{beam}(x_{1:t+1})})
        \end{aligned}
    \end{equation}
    where $q_{beam}$ is the beam sampling probability from the small model.
    And we have
    \begin{equation}
        \begin{aligned}
            P_{2,1}
            =&P(x^{(1)}_{1:t+1}\text{ rejected})(P_{1,2}+P_{2,2})\\
            =&(1-\alpha_1)(P_{1,2}+P_{2,2})
        \end{aligned}
    \end{equation}
    where $\alpha_j=\sum_{x_{1:t+1}}q_{beam}(x_{1:t+1})\min(1,\frac{p_j(x_{1:t+1})}{q_{beam}(x_{1:t+1})})$ is the probability that $j$-th beam is accepted given that the first $j-1$ beams are all rejected. 

    Similarly, for $i=2,\ldots,W_S$ we have
    \begin{equation}
        P_{1,i} = q_{beam}(x_{1:t+1})\min(1,\frac{p_i(x_{1:t+1})}{q_{beam}(x_{1:t+1})})
    \end{equation}
    for $i=2,\ldots,W_S-1$ we have
    \begin{equation}
        \begin{aligned}
            P_{2,i}
            =&P(x^{(i)}_{1:t+1}\text{ rejected})(P_{1,i+1}+P_{2,i+1})\\
            =&(1-\alpha_i)(P_{1,i+1}+P_{2,i+1})
        \end{aligned}
    \end{equation}
    for $i=W_S$, we have
    \begin{equation}
        \begin{aligned}
            P_{2,W_S}
            =&(1-\alpha_{W_S})p_{W_S+1}(x_{1:t+1})\\
            =&p_{W_S}(x_{1:t+1})-\min(q_{beam}(x_{1:t+1}), p_{W_S}(x_{1:t+1}))
        \end{aligned}
    \end{equation}
    The last equation is because $p_{W_S+1}(x_{1:t+1})=\frac{p_{W_S}(x_{1:t+1})-\min(q_{beam}(x_{1:t+1}), p_{W_S}(x_{1:t+1}))}{1-\alpha_{W_S}}$.
    
    So
    \begin{equation}
        \begin{aligned}
            &P_{1,W_S}+P_{2,W_S}=p_{W_S}(x_{1:t+1})
        \end{aligned}
    \end{equation}
    With similar steps, we have
    \begin{equation}
        \begin{aligned}
            &P_{1,i}+P_{2,i}=p_{i}(x_{1:t+1}),\quad\forall i=1,\ldots,W_S
        \end{aligned}
    \end{equation}
    Therefore, $P(\tilde{x}_{1:t+1}^{(1)}=x_{1:t+1})=P_{1,1}+P_{2,1}=p_1(x_{1:t+1})=p_{beam}(x_{1:t+1})$.

    \textbf{Step (2)}. Now, assume $\tilde{x}_{1:t+1}^{(i)}\overset{\mathrm{iid}}{\sim} p_{beam}$ for $i=1,\ldots,k$ ($1\le k<W_L$), let us prove $\tilde{x}_{1:t+1}^{(k+1)}\sim p_{beam}$ and it is independent to previous output beams.

    \textit{Case 1: }$\tilde{x}_{1:t+1}^{(k+1)}$ is directly sampled from $p_{beam}$. Then it is obvious that  $\tilde{x}_{1:t+1}^{(k+1)}\sim p_{beam}$ and it is independent to previous output beams.

    \textit{Case 2:} $\tilde{x}_{1:t+1}^{(k+1)}$ is one of the draft beam or it is sampled from residual distribution $p^\prime$. In this case, after output the first $k$ beams, there still are at least one draft beams unverified, and $p^\prime =p_{beam}$. Then with procedure identical to step (1), we can prove $\tilde{x}_{1:t+1}^{(k+1)}\sim p_{beam}$. In addition, since the unverified draft beams are independent to all previous beams, $\tilde{x}_{1:t+1}^{(k+1)}$ is independent to previous output beams.

    With step (1) and (2), we prove the correctness of our algorithm.
\end{proof}

\section{Experiment Configuration}

The experiments are conducted on a server with one NVIDIA-L40 GPU. We use top-$k$ and top-$p$ warping in the experiments with $k=10$, $p=0.8$.

We use the command "\texttt{nvidia-smi --query-gpu=power.draw --format=csv}" to get GPU power every second, and sum them up as the total energy consumption. 
We use average energy consumption per token to measure energy efficiency. We follow the three principles proposed in~\cite{yang2023part} to minimize the measurement error using \texttt{nvidia-smi}.

\section{Supplementary Experiments}

Here we demonstrate some supplementary experiments about our proposed method.

\subsection{OPT evaluated on Spider}

Figure \ref{fig:opt_spider} shows that \method{} achieves similar perplexity (i.e., likelihood) of output sequences with beam sampling while being much more efficient.

\begin{figure}[h!]
    \centering
    \begin{subfigure}[b]{0.22\textwidth}
        \centering
        
        \includegraphics[width=\textwidth]{figures/opt_spider_speed.png}
        \caption{Speed vs Perplexity}
        \label{fig:opt_spider_speed}
    \end{subfigure}
    \hfill
    \begin{subfigure}[b]{0.22\textwidth}
        \centering
        \includegraphics[width=\textwidth]{figures/opt_spider_energy.png}
        \caption{Energy vs Perplexity}
        \label{fig:opt_spider_energy}
    \end{subfigure}
    \caption{Evaluation on Spider dataset with OPT models. Perplexity (PPL) is \textbf{lower} the better. The blue points represent performances of \method{} under different parameter settings $(\gamma,W_S,t)$. The blue and yellow lines mark the Pareto fronts of \method{} and beam sampling.}
    \label{fig:opt_spider}
\end{figure}

\subsection{Average Accepted Width}

In beam sampling, the beam width directly controls the trade-off between efficiency and effectiveness. We believe the average accepted number of beams or average accepted width (denoted as $\bar{W}$) has a similar effect to \method{}. To confirm this, we plot the relationship between $\bar{W}$ and downstream metric (EA Score), speed, and energy consumption in Figure \ref{fig:W}, respectively. We can see that in general, a larger $\bar{W}$ corresponds to a higher EA score, slower speed, and higher energy consumption, which fits our intuition.

\begin{figure}[h!]
    \centering
    \begin{subfigure}[b]{0.22\textwidth}
        \centering
        \includegraphics[width=\textwidth]{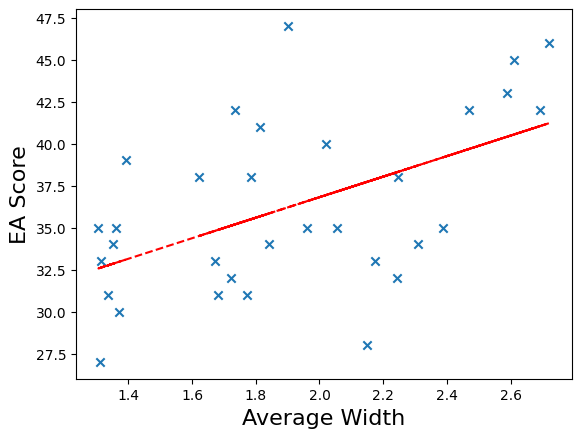}
        \caption{$\bar{W}$ vs EA Score}
        \label{fig:w_score}
    \end{subfigure}
    \hfill
    \begin{subfigure}[b]{0.22\textwidth}
        \centering
        \includegraphics[width=\textwidth]{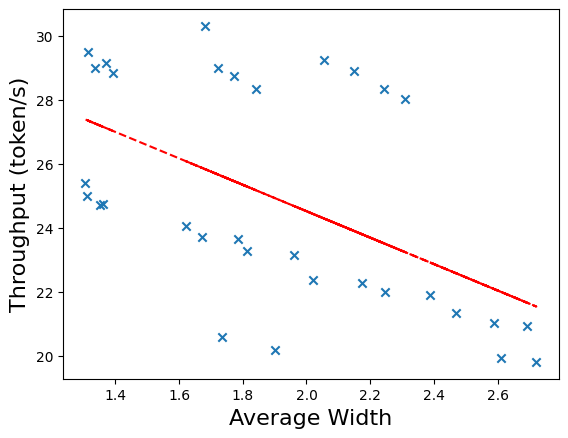}
        \caption{$\bar{W}$ vs Speed}
        \label{fig:w_speed}
    \end{subfigure}
    
    \begin{subfigure}[b]{0.22\textwidth}
        \centering
        \includegraphics[width=\textwidth]{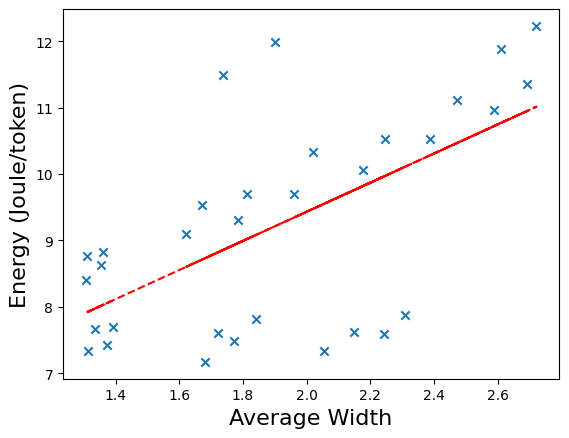}
        \caption{$\bar{W}$ vs Energy}
        \label{fig:w_energy}
    \end{subfigure}
    \caption{Relationship between average accepted width ($\bar{W}$) and downstream effectiveness and efficiency. (Llama-2 models, Spider dataset)}
    \label{fig:W}
\end{figure}

\begin{figure}[h!]
    \centering
    \begin{subfigure}[b]{0.22\textwidth}
        \centering
        \includegraphics[width=\textwidth]{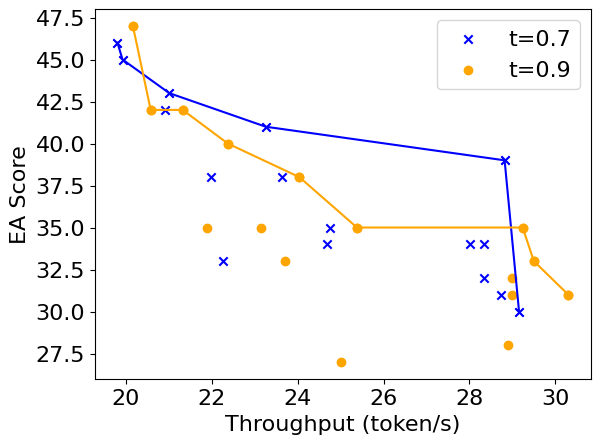}
        \caption{Speed vs EA Score}
        \label{fig:t_speed}
    \end{subfigure}
    \hfill
    \begin{subfigure}[b]{0.22\textwidth}
        \centering
        \includegraphics[width=\textwidth]{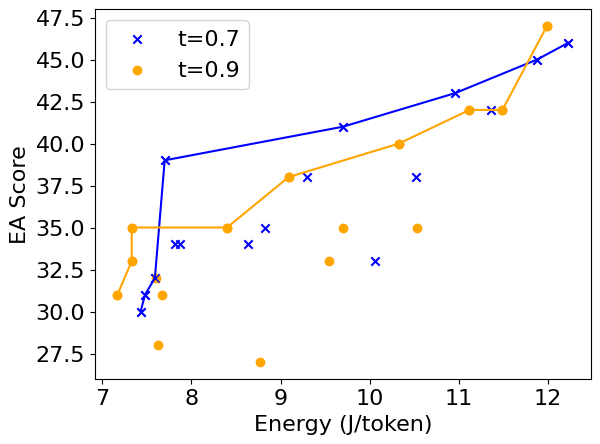}
        \caption{Energy vs EA Score}
        \label{fig:t_energy}
    \end{subfigure}
    \caption{Performance of \method{} with dynamic width threshold $t\in\{0.7,0.9\}$. (Llama-2, Spider dataset)}
    \label{fig:t}
\end{figure}

\begin{figure}[h!]
    \centering
    \begin{subfigure}[b]{0.22\textwidth}
        \centering
        \includegraphics[width=\textwidth]{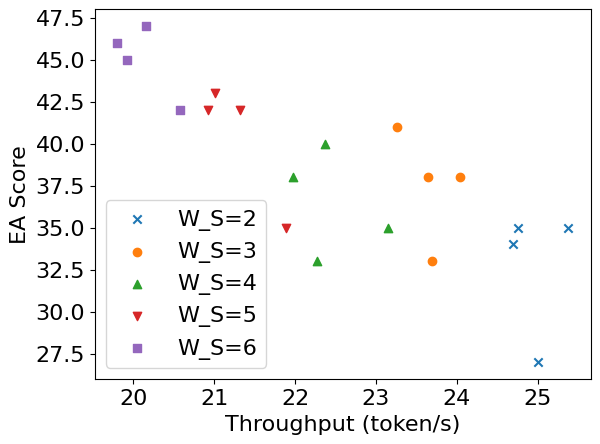}
        \caption{Speed vs EA Score}
        \label{fig:W_S_speed}
    \end{subfigure}
    \hfill
    \begin{subfigure}[b]{0.22\textwidth}
        \centering
        \includegraphics[width=\textwidth]{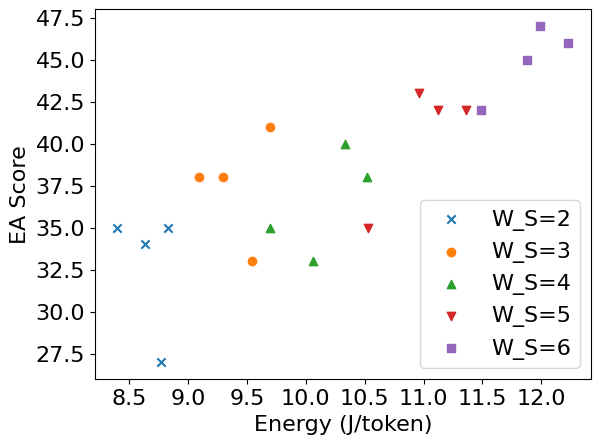}
        \caption{Energy vs EA Score}
        \label{fig:W_S_energy}
    \end{subfigure}
    \caption{Performance of \method{} with different draft width $W_S\in\{2,3,4,5,6\}$. (Llama-2, Spider dataset)}
    \label{fig:W_S}
\end{figure}

\subsection{Dynamic Width Threshold}

Now, we evaluate how the dynamic width threshold $t$ (see Eq \ref{eq:WL}) affects our method. We show the performance of $t=0.7$ and $t=0.9$ in Figure \ref{fig:t}. First, we see that $t=0.7$ tends to achieve a higher effectiveness. It is because a smaller $t$ leads to a larger $W_L^{(l)}$, which would increase the effectiveness. However, setting $t=0.9$ could achieve a higher speed and energy efficiency that setting $t=0.7$ cannot achieve.

\subsection{Number of Draft Beams}

Next we evaluate how the number of draft beams (i.e., $W_S$) affects the performance of \method{}. We show the performance under different $W_S$ in Figure \ref{fig:W_S}. we can see that a higher $W_S$ in general leads to lower efficiency but higher effectiveness. It is because a higher $W_S$ will increase the cost of running small model, large model, and verification. But it also helps increase the number of beams accepted at each layer.

\end{document}